\documentclass{ecai}
\usepackage{graphicx}
\usepackage{amsmath}
\usepackage{booktabs}
\usepackage[linesnumbered,ruled]{algorithm2e}
\usepackage{appendix}
\usepackage{url}
\usepackage{multirow}
\usepackage{epstopdf}
\usepackage{amsthm}
\newtheorem{remark}{Remark}

\newtheorem{theorem}{Theorem}

\usepackage{enumitem}
\usepackage{array}
\usepackage{subfigure}
\usepackage{color}
\usepackage{wasysym}

\newtheorem{definition}{Definition}

\newcommand{\BibTeX}{B\kern-.05em{\sc i\kern-.025em b}\kern-.08em\TeX}

\begin{document}

\begin{frontmatter}

\paperid{863}

\title{Learning Order Forest for \\ Qualitative-Attribute Data Clustering}

\author[1]{\fnms{Mingjie}~\snm{Zhao}}
\author[1]{\fnms{Sen}~\snm{Feng}}
\author[1,6]{\fnms{Yiqun}~\snm{Zhang}\thanks{Corresponding Author. Email: yqzhang@gdut.edu.cn}}
\author[2,3,6]{\fnms{Mengke}~\snm{Li}}
\author[4,5,6]{\fnms{Yang}~\snm{Lu}}
\author[6]{\fnms{Yiu-Ming}~\snm{Cheung}}

\address[1]{School of Computer Science and Technology, Guangdong University of
Technology, Guangzhou, China}
\address[2]{Guangdong Laboratory of Artificial Intelligence and Digital Economy (SZ), Shenzhen, China}
\address[3]{School of Computer Science and Software Engineering, Shenzhen University, Shenzhen, China}
\address[4]{Fujian Key Laboratory of Sensing and Computing for Smart City, School of Informatics, Xiamen University, China}
\address[5]{Key Laboratory of Multimedia Trusted Perception and Efficient Computing,
Ministry of Education of China, Xiamen University, China}
\address[6]{Department of Computer Science, Hong Kong Baptist University, Hong Kong, China}

\begin{abstract}
Clustering is a fundamental approach to understanding data patterns, wherein the intuitive Euclidean distance space is commonly adopted. However, this is not the case for implicit cluster distributions reflected by qualitative attribute values, e.g., the nominal values of attributes like symptoms, marital status, etc. This paper, therefore, discovered a tree-like distance structure to flexibly represent the local order relationship among intra-attribute qualitative values. That is, treating a value as the vertex of the tree allows to capture rich order relationships among the vertex value and the others. To obtain the trees in a clustering-friendly form, a joint learning mechanism is proposed to iteratively obtain more appropriate tree structures and clusters. It turns out that the latent distance space of the whole dataset can be well-represented by a forest consisting of the learned trees. Extensive experiments demonstrate that the joint learning adapts the forest to the clustering task to yield accurate results. Comparisons of 10 counterparts on 12 real benchmark datasets with significance tests verify the superiority of the proposed method. Source code of the proposed method is available at~\cite{code}.
\end{abstract} 

\end{frontmatter}
\vspace{-8pt}
\section{Introduction}\label{sct:Intro}
Datasets composed of multi-valued qualitative attributes (also known as categorical or nominal attributes) are ubiquitous in cluster analysis tasks~\cite{intro13, ecai-2023-2, ecai-2023-3}, for instance, the clustering of clients, patients, and so on. Unlike a numerical attribute with all its values distributed on an Euclidean distance axis, the qualitative values of a categorical attribute cannot reflect its distance structure. For example, there are three possible values \{driver, lawyer, nurse\} for attribute ``occupation'', but their optimal numerical embedding on a distance axis is unknown. Therefore, most related works are dedicated to mining this implicit distance structure~\cite{intro2,intro5,intro10}, and can be roughly divided into: 1) Distance measures and 2) Distance learning methods~\cite{intro7}, according to whether they connect to the downstream clustering tasks.

It is noteworthy that distance measures for qualitative values, e.g., Hamming distance~\cite{hdm}, simply perform a boolean distance measurement based on whether two values are the same or not. Although subsequent measures~\cite{adm, abdm, cde} introduce various data statistical information to improve distance discrimination, they still treat each inter-value distance in isolation without considering the overall distance structure of all possible values. To address this issue, information entropy is introduced to effectively couple the possible values of an attribute, and an entropy-based distance metric~\cite{lsm} is formed accordingly to more appropriately quantify the distances. Recently, more distance metrics~\cite{ebdmjournal, udm, adc} attempt to orderly embed possible values into a distance axis to obtain a distance structure similar to that of numerical attributes. However, their value order relies on the explicit semantic order of the values, e.g., \{strong\_accept, clear\_accept, weak\_accept\} for an ordinal attribute ``review recommendation'', which is often unavailable when dealing with nominal attributes.

\begin{figure}[!t]	
\centerline{\includegraphics[width=3.5in]{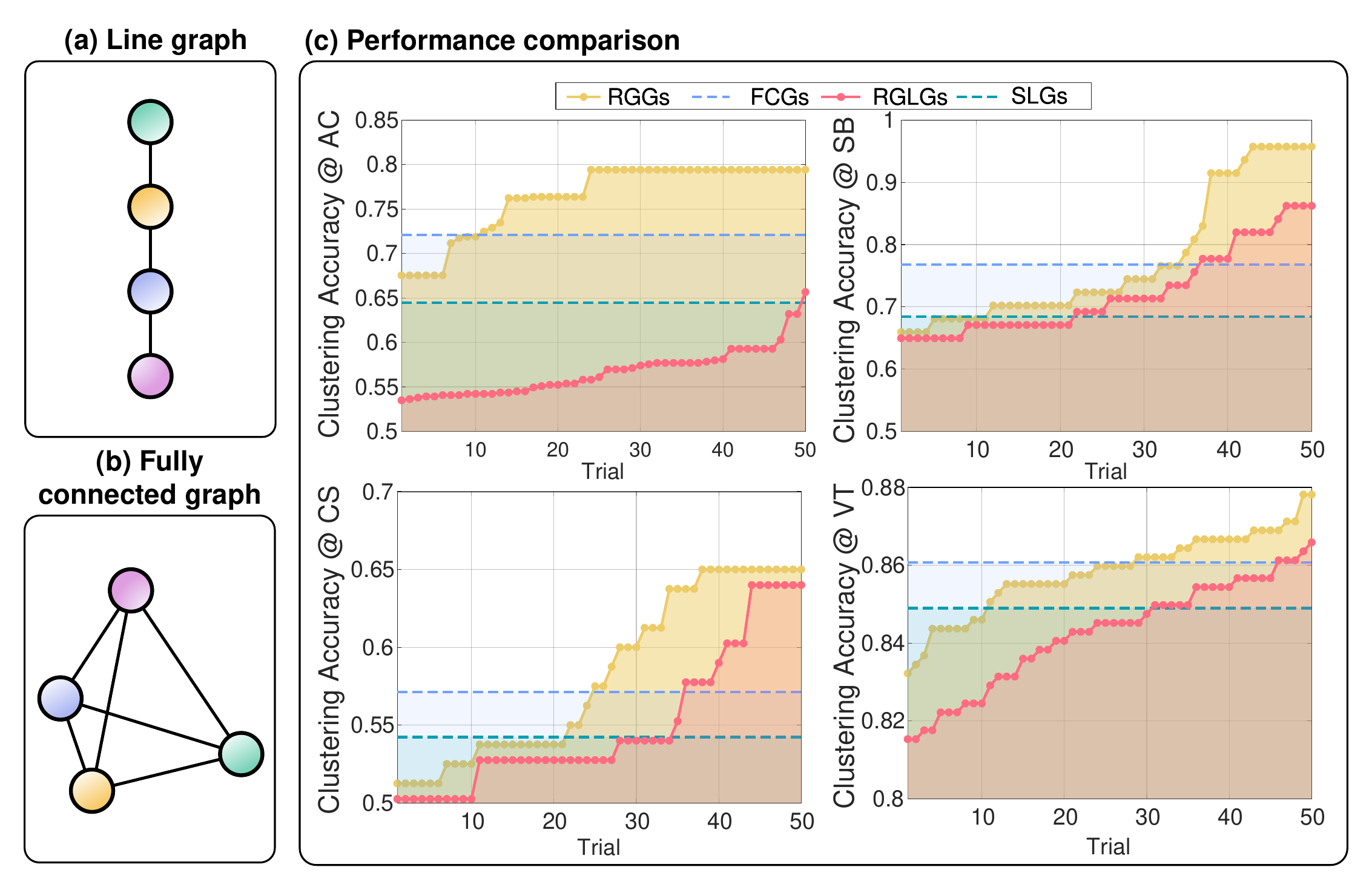}}

\caption{An intuitive comparison of clustering performance by adopting different types of distance structures. (a) and (b) demonstrate the typical line graph and fully connected graph. (c) demonstrates the $k$-modes~\cite{kmd} clustering performance with the following distance structures: 1) Randomly Generated Graphs (RGGs, not necessarily fully connected but ensure all attribute values are connected), 2) Fully Connected Graphs (FCGs), 3) Randomly Generated Line Graphs (RGLGs), and 4) Semantic Line Graphs (SLGs, arrange possible values in the graph according to their semantic order). The RGGs and RGLGs involving randomization are implemented 50 times, and the clustering accuracy is sorted for better visualization.}	
\label{fig:toy_example}	
\end{figure}

Distance learning methods focusing on connecting distance measurements and clustering tasks have received more attention in recent years, as they can often obtain distance structures that are more suitable for clustering. An early attempt~\cite{ocil} models sample-cluster similarity as the occurrence probability of possible values in clusters. Later, approaches that directly model the distance space have been proposed, including kernel-based~\cite{untie} and graph-based~\cite{dlc, HDC} distance metric learning. However, they are all based on specific hypotheses, e.g., specific kernels can well-represent the distance metric, or the distance metric follows a graph structure of possible values inspired by specific domain knowledge. 

Benefiting from the universality of the graph, the graph-based learning approaches~\cite{dlc, HDC, het2hom, adc, KDD'24} are proven to achieve more competitive clustering performance. More specifically, a graph has been adopted to represent the relationship among possible values of an attribute. For the ordinal attribute values with explicit semantic order, they adopt a line graph shown in Figure~\ref{fig:toy_example}~(a) whereby the weights of edges are learned to indicate the distances. For the nominal attribute values without semantic order, they use a fully connected graph shown in Figure~\ref{fig:toy_example}~(b) to facilitate distance learning. Nevertheless, a pair of coupled thorny problems still lies ahead: reasonable prior knowledge is the premise of effective distance learning whilst the data knowledge is usually obtained by observing data distribution under well-defined distance metrics.

The limitations brought by the prior knowledge can be fully verified by Figure~\ref{fig:toy_example}~(c). 
It can be seen that clustering under the two types of random graphs, i.e., RGGs and RGLGs, is significantly more promising to obtain higher accuracy compared to FCGs and SLGs. Moreover, RGGs obviously outperform RGLGs probably because RGGs do not overly restrict the relationship among attribute values to follow an order, thus laying the foundation for obtaining the latent optimal relationship through randomization. The above observations provide two hints: 1) A higher degree of topological freedom for the distance structure brings better clustering results, and 2) Explicit semantic order may not be optimal for clustering. Hence, how to obtain the optimal distance structure w.r.t. certain clustering tasks without relying on prior knowledge of the value relationship is crucial for breaking through the current clustering performance.


In this paper, a new qualitative data learning paradigm that performs Clustering with Order Forest learning (COForest) is proposed. The learning process is no longer limited to adjusting the distance between values under the hypothesized value graph, but allows both graph structure and distances to be jointly learned with clustering. It learns by iteratively: 1) Inferring graph structures w.r.t. the current data partition, and 2) Performing clustering using the graph distance structure to more appropriately obtain data partition. Since the inferred graphs are minimal spanning trees, they can concisely and flexibly represent the relationship among possible values. It turns out that the learning processes repeatedly improve the upper bound and approach it, thus bypassing sub-optimal solutions and achieving superior clustering accuracy. 
Main contributions of this work are summarized into three-fold:
\vspace{-5pt}
\begin{itemize}
    \item A new insight is introduced that there exists an optimal latent graph w.r.t. certain clustering tasks in representing the distance structure of a qualitative attribute, and the graph should be flexibly determined without being restricted by prior knowledge.
    \item COForest is proposed to iteratively optimize the distance structures and clusters to circumvent sub-optimal solutions. Compared with the existing approaches that only tune distances under a given topology, COForest further allows the reconstruction of the topology and thus brings a higher degree of learning freedom.
    \item Comprehensive experimental evaluations including significance tests, ablation studies, and qualitative visual comparisons, have been conducted to demonstrate the superiority of thoroughly learning distance structures without prior knowledge bias.
\end{itemize}

\vspace{-8pt}
\section{Propose Method}
\label{sct:Method}

\subsection{Problem Formulation}
\label{sct:Method_PF}
The problem of categorical data clustering with distance learning is formulated below. Given a categorical dataset $X=\{\mathbf{x}_1,\mathbf{x}_2,...,\mathbf{x}_n\}$ with $n$ data samples. Each sample $\mathbf{x}_i$ can be denoted as an $l$-dimensional row vector $\mathbf{x}_i=[x_{i,1},x_{i,2},..,x_{i, l}]^\top$ represented by $l$ attributes $A=\{\mathbf{a}_1,\mathbf{a}_2,...,\mathbf{a}_l\}$. Each attribute $\mathbf{a}_r$ can be denoted as a column vector $\mathbf{a}_r=[x_{1,r},x_{2,r},...,x_{n,r}]$ composed of the $r$-th values of all the $n$ samples, where the $n$ values can be viewed as sampled from a limited number of possible values $V_r=\{v_{r,1},v_{r,2},...,v_{r,o_r}\}$ with $o_r$ indicating the number of possible values of $\mathbf{a}_r$, and $v_{r,g}$ indicating the $g$-th possible value of $\mathbf{a}_r$. 

Partitional clustering aims to partition $X$ into $k$ non-overlapping sample subsets $C=\{C_1,C_2,...,C_k\}$ with the objective of minimizing the intra-subset dissimilarity~\cite{sup1,sup4,sup10}, which is conventionally expressed as
\begin{equation}
    \label{eq:obj_pre}	L(\mathbf{Q})=\sum_{j=1}^k\sum_{i=1}^nq_{i,j}\cdot\Gamma(\mathbf{x}_i,C_j),
\end{equation}
where $\mathbf{Q}$ is an $n\times k$ matrix with its $(i,j)$-th entry $q_{i,j}\in\{0,1\}$ indicating the affiliation between sample $\mathbf{x}_i$ and cluster $C_j$, and each row of $\mathbf{Q}$ (e.g., the $i$-th row) satisfies $\sum_{j=1}^{k}q_{i,j}=1$. During clustering, the values of $q_{i,j}$ are determined by
\begin{equation}
	\label{eq:qim_pre}
	q_{i,j}=\left\{
	\begin{array}{ll}
		1,  & \text{if}\ j=\arg\min\limits_y\Gamma(\mathbf{x}_i,C_y)\\
		0,  & \text{otherwise.}\\
	\end{array}\right.
\end{equation}
The sample-cluster dissimilarity $\Gamma(\mathbf{x}_i,C_j)$ is collectively reflected by the value-level dissimilarities of different attributes by
\begin{equation}
	\label{eq:dist_m}
	\Gamma(\mathbf{x}_i,C_j)=\sum_{r=1}^{l}\gamma(x_{i,r},C_{j,r}),
\end{equation}
where $\gamma(x_{i,r}, C_{j,r})$ is the dissimilarity between the sample $\mathbf{x}_i$ and the cluster $C_j$ from the perspective of attribute $\mathbf{a}_r$. 

Since the relationship among possible values remains to be defined, it is not straightforward to compute the dissimilarity between a sample and a sample set $C_j$. Therefore, the key to this work is how to reasonably define the distance $\gamma(x_{i,r}, C_{j,r})$ in the attribute aspect. In the following, we first show how to flexibly model the value-level relationship, based on which the corresponding distance metric is defined. Then the joint learning scheme is proposed to make the defined distance structure learnable with clustering.
\vspace{-8pt}
\subsection{Order Forest Construction}
\label{sct:Method_OFC}

As mentioned in Section~\ref{sct:Intro}, both the line graph and the fully connected graph have significant shortcomings in representing the distance structure of an attribute. That is, a line graph forces the relationship among all the possible values to be an order, and its effectiveness heavily relies on the prior order knowledge of the possible values. By contrast, a fully connected graph constructs multiple paths for each pair of possible values, and thus cannot concisely and exactly reflect the relationship among possible values.

Therefore, we propose to construct order forest $M=\{M_1, M_2,\ldots, M_{l}\}$ where $M_r$ is a Minimal Spanning Tree (MST) corresponding to the $r$-th attribute $\mathbf{a}_r$. Each tree $M_r$ is denoted as a tuple $ M_r = <V_r, B_r, W_r> $ where all the $o_r$ possible values in $V_r$ are treated as nodes. $B_r$ is the set of $o_r-1$ edges that have the minimum sum of edge lengths and can connect all the $o_r$ nodes. $W_r$ contains the weights reflecting the edge lengths. Such a tree can be searched through the Prim or Kruskal~\cite{prim-algorithm, kruskal-algorithm} algorithm given a fully connected graph with exact edge weights as shown in Figure~\ref{fig:pipleline_figure}(a). The weights are actually the distance between any pair of possible values, which can be computed using any existing distance measure defined for qualitative values. From Figure~\ref{fig:pipleline_figure}(b), it can be seen that the constructed MST concisely and exactly reflects the local order relationship among possible values, and is thus called an order tree. 

\begin{figure}[!t]	
\centerline{\includegraphics[width=3.3in]{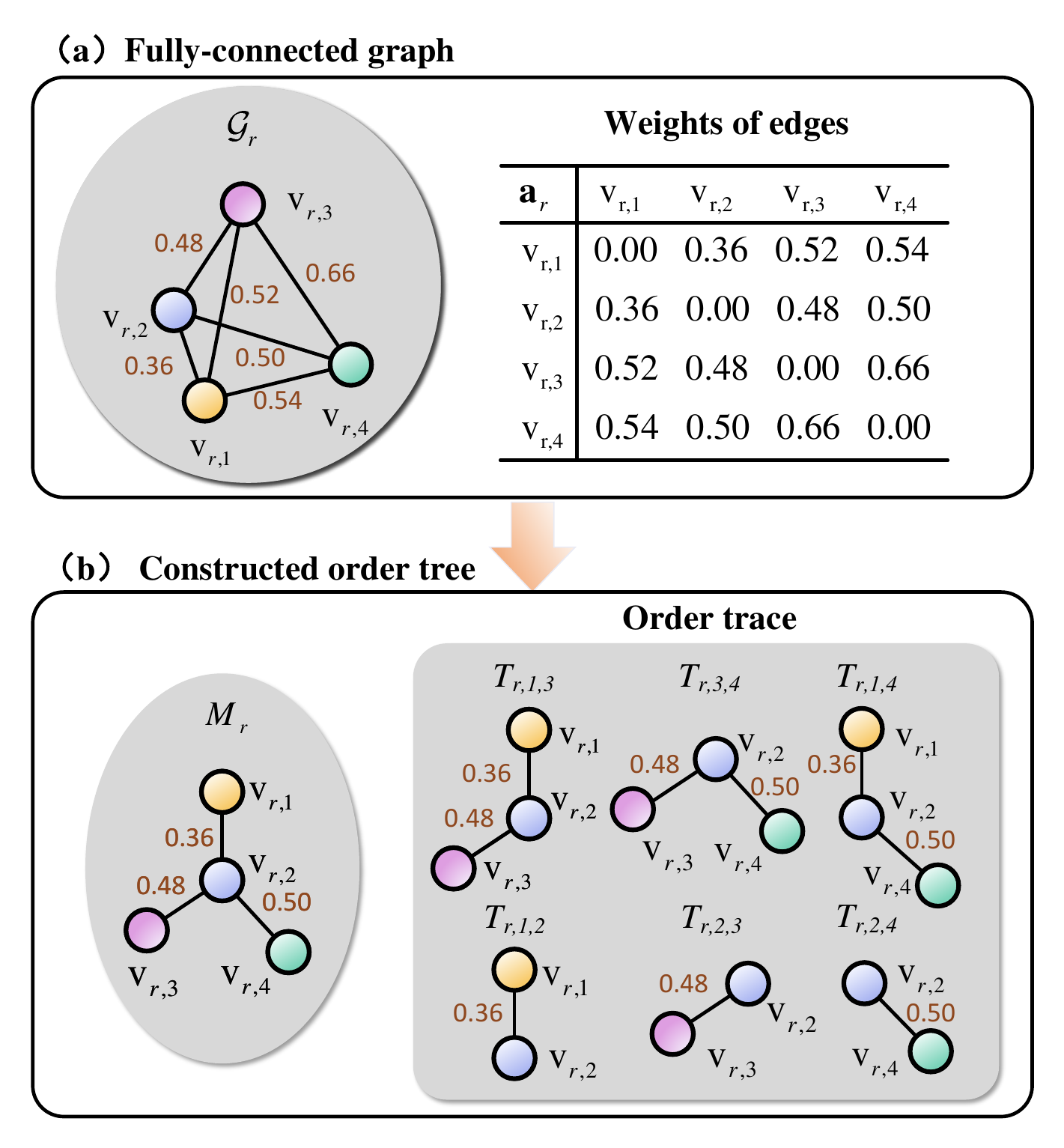}}
\caption{Process of order tree construction. (a) A fully connected graph ${\cal G}_r$ is prepared with a distance matrix reflecting the edge weights. (b) Prim or Kruskal algorithm is implemented to generate an order tree with a unique order trace between each pair of nodes, which is defined in Definition~\ref{def:trace}.}	
\label{fig:pipleline_figure}	
\end{figure}

\begin{remark}[Generalization of relationship graph]
\label{remark1}
    Given an attribute $\mathbf{a}_r$, the constructed order tree $M_r$ represents the order relationship among different possible value subsets and provides a unique trace between each pair of possible values as shown in Figure~\ref{fig:pipleline_figure}(b). Therefore, $M_r$ can be regarded as generalized from: 1) The line graph representing all the values in one order, and 2) The fully connected graph representing redundant relationships among possible values.
\end{remark}

To define dissimilarity between nodes according to the order tree, we first define the order trace between two nodes.
\begin{definition}[Order trace]\label{def:trace}
Given an order tree $M_r$, order trace \( T_{r,s,u} \) is a set containing all the weights between the nodes located on the shortest path from node \(v_{r,s}\) to \(v_{r,u}\). Since the order tree is undirected, \( T_{r,s,u}=T_{r,u,s}\) holds.
\end{definition}

It turns out that each order trace uniquely concatenates a certain number of closely connected nodes, while the other nodes that are further away are excluded. This allows the order tree to flexibly represent local order relationships of possible value subsets, thus yielding a higher degree of freedom in distance structure learning. The dissimilarity between two nodes $v_{r,s}$ and $v_{r,u}$ can be intuitively reflected by their order trace, e.g., by adding all the weights on the trace. So far, the definition of weights plays a key role in constructing order forest and forming the dissimilarity between nodes. Since we focus on the clustering task, the weights and dissimilarity are defined by sufficiently leveraging the cluster information in Section~\ref{sct:Method_OFD}, and the learning of the dissimilarity is incorporated with clustering in Section~\ref{sct:joint}. 
\vspace{-8pt}
\subsection{Clustering-Friendly Trace Distance}
\label{sct:Method_OFD}

Given cluster partition $\mathbf{Q}$, weights of a fully connected graph should be first computed and then the order tree extracted from it forms value-level dissimilarities. Specifically, weight between two nodes $v_{r,u}$ and $v_{r,s}$ is defined as the distance between their probability distributions extracted from different clusters by
\begin{equation}
    \label{eq:compute_w}
    {w_{r,u,s}} = \left\| {{{\bf{p}}_{{v_{r,u}}}} - {{\bf{p}}_{{v_{r,s}}}}} \right\|_p,
\end{equation}
where \({{\bf{p}}_{{v_{r,u}}}} = \left[ {{p_{{C_1}|{v_{r,u}}}},{p_{{C_2}|{v_{r,u}}}}, \ldots ,{p_{{C_k}|{v_{r,u}}}}} \right]\) and \({{\bf{p}}_{{v_{r,s}}}} = \left[ {{p_{{C_1}|{v_{r,s}}}},{p_{{C_2}|{v_{r,s}}}}, \ldots ,{p_{{C_k}|{v_{r,s}}}}} \right]\) are $k$-dimensional vectors representing the probability distributions of $v_{r,u}$ and $v_{r,s}$ across all the $k$ clusters. ${p_{{C_j}|{v_{r,u}}}} = {\left| {{X_{r,u}} \cap {C_j}} \right|}/{\left| {{X_{r,u}}} \right|}$ where \(| \cdot |\) counts the number of samples in a set and \({X_{r,u}} = \left\{ {{{\bf{x}}_i}|{x_{i,r}} = {v_{r,u}}} \right\}\) is a sample set collecting all the samples in \( X\) with their $r$-th values equal to $v_{r,u}$. The symbol \(\left\| \cdot \right\|_p\) represents $p$-norm, which intuitively reflects the difference in the direct probability distribution of nodes, where we adopt a common setting $p=2$. A distribution \({{\bf{p}}_{{v_{r,u}}}}\) describes the distribution pattern of a value across all the $k$ clusters, so that two values with similar patterns are considered to be more similar. We therefore use weights defined in Eq.~(\ref{eq:compute_w}) to construct the order tree that is with the likelihood of producing the current partition $\mathbf{Q}$.

With constructed order tree $M_r$ and the weight defined in Eq.~(\ref{eq:compute_w}), the dissimilarity between two possible values $v_{r,u}$ and $v_{r,s}$ is defined as the length of their order trace as defined in Definition~\ref{def:trace}, which can be written as
\begin{equation}
    \label{eq: tree-order_distance}
{d_{r,u,s}}{\rm{  = }}\sum_{w_{r,u,s}\in {T_{r,u,s}}} {w_{r,u,s}}.
\end{equation}
Since this is computed based on the weights defined in a clustering-friendly manner by Eq.~(\ref{eq:compute_w}), we thus call it clustering-friendly trace distance. Accordingly, the sample-cluster distance $\gamma(x_{i,r}, C_{j,r})$ reflected by the order tree structure $M_r$ can be defined based on the value-level trace distance as
\begin{equation}
	\label{eq:dist_r}
 \gamma(x_{i,r},C_{j,r}; M_r) ={{{\bf{p}}}_{j,r}^\top{{\bf{d}}_{r,u}}},
\end{equation}
where we assume that the sample value $x_{i,r}$ equals to the possible value $v_{r,u}$ for simplicity without loss of generality. $\mathbf{d}_{r,u}=[d_{r,u,1},d_{r,u,2},...,d_{r,u,o_r}]$ is an $o_r$-dimensional vector containing the trace distances between $v_{r,u}$ and each of the $o_r$ possible values in $V_r$, and $\mathbf{p}_{j,r}=[p_{v_{r,1}|C_j},p_{v_{r,2}|C_j},...,p_{v_{r,o_r}|C_j}]$ is an $o_r$-dimensional vector describing the probability distribution of possible values in $V_r$ within cluster $C_j$, where $p_{v_{r,u}|C_j}=|X_{r,u}\cap C_j|/|C_j|$.

Correspondingly, the overall sample-cluster distance $\Gamma(\mathbf{x}_i,C_j)$ defined based on the whole order forest $M$ can be formulated as 
\begin{equation}
	\label{eq:dist_final}
	\Gamma(\mathbf{x}_i,C_j; M)=\sum_{r=1}^{l}\gamma(x_{i,r},C_{j,r};M_r).
\end{equation}

\begin{theorem}\label{thm:d_i_r_g}
The trace distance measure $d_{r,u,s}$ defined in the context of the order tree $M_r$ represents a valid distance metric.
\end{theorem}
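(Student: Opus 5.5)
We verify the four metric axioms for $d_{r,u,s}$ on $V_r$ directly from Eq.~(\ref{eq: tree-order_distance}), using two facts. First, every edge weight from Eq.~(\ref{eq:compute_w}) is a $2$-norm, so $w_{r,u,s}\ge 0$. Second, a spanning tree contains exactly one simple path between any two nodes, so the order trace $T_{r,u,s}$ of Definition~\ref{def:trace} is well defined, and $d_{r,u,s}$ is the weighted path length in $M_r$.

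Three axioms are immediate. \emph{Non-negativity} holds because $d_{r,u,s}$ is a sum of non-negative terms. \emph{Symmetry} holds because $T_{r,u,s}=T_{r,s,u}$, as noted in Definition~\ref{def:trace}. \emph{Reflexivity} $d_{r,u,u}=0$ holds because the path from a node to itself has no edges, so $T_{r,u,u}=\emptyset$.

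For the \emph{triangle inequality} $d_{r,u,s}\le d_{r,u,t}+d_{r,t,s}$, we concatenate the unique $u$--$t$ path with the unique $t$--$s$ path. This gives a walk from $v_{r,u}$ to $v_{r,s}$. Every walk in a tree traverses each edge of the unique $u$--$s$ path at least once, since removing such an edge disconnects $u$ from $s$. Hence, as weights are non-negative, the length of the $u$--$s$ path is at most the length of the walk. Equivalently, the three pairwise paths among $u,s,t$ meet at a median node $c$. Writing $d_{r,u,s}=d(u,c)+d(c,s)$, $d_{r,u,t}=d(u,c)+d(c,t)$ and $d_{r,t,s}=d(t,c)+d(c,s)$, the slack is exactly $2d(c,t)\ge 0$. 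A side remark on notation: Eq.~(\ref{eq: tree-order_distance}) writes the sum as a sum over a set, so it must be read as summing over the edges of the trace, counting equal weights separately. We adopt that reading throughout.

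The main obstacle is the \emph{identity of indiscernibles}: $d_{r,u,s}=0$ should imply $u=s$. This can fail. If two distinct values $v_{r,u}\neq v_{r,s}$ have identical cluster-distribution vectors $\mathbf{p}_{v_{r,u}}=\mathbf{p}_{v_{r,s}}$, then their edge weight is $0$. The MST then joins them by that zero-weight edge, and their trace distance is $0$. We would handle this as follows. First, prove the positive case: if all pairwise vectors $\mathbf{p}_{v}$ are distinct, every tree edge has strictly positive weight. Any path between distinct nodes contains at least one edge, so $d_{r,u,s}>0$, and $d_{r,u,s}$ is a genuine metric. In the degenerate case, $d_{r,u,s}$ is a pseudometric. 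We would state that it becomes a metric on the quotient obtained by identifying values with identical distributions, since such values are indistinguishable for clustering. We expect the authors' intended statement to assume distinct distributions, or a strictly positive weighting.
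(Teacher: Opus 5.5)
Your proposal is correct and follows the same route as the paper. Non-negativity comes from the $2$-norm edge weights, symmetry from the undirected tree, and the triangle inequality from uniqueness of tree paths together with non-negative weights. Your cut argument (every walk from $v_{r,u}$ to $v_{r,s}$ must cross each edge of their unique path) and the median-node slack $2d(c,t)$ make rigorous the paper's informal ``detour involves extra weights'' step.

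The paper checks only non-negativity, symmetry and the triangle inequality, and never addresses identity of indiscernibles. Your observation that distinct values with equal $\mathbf{p}$ vectors get trace distance $0$ is therefore a correct refinement the paper omits: in general $d_{r,u,s}$ is only a pseudometric. One small imprecision: with ties, the MST need not contain the direct zero-weight edge between such values. The conclusion $d_{r,u,s}=0$ still holds, because by the cycle property every edge on their tree path then has weight at most $0$.
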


\begin{proof}
$d_{r,u,s}$ follows non-negativity, symmetry, and triangle inequality for any $r\in\{1,2,...,l\}$ and $u,s,g\in\{1,2,...,o_r\}$ as shown below.\\
\textbf{Non-negativity:} \(d_{r,u,s}\geq 0 \). $d_{r,u,s}$ is the length of an order trace, which is always non-negative comprising non-negative weights according to Eqs.~(\ref{eq:compute_w}) and~(\ref{eq: tree-order_distance}) with the norm set at $p=2$;\\
\textbf{Symmetry}: \(d_{r,u,s} = d_{r,s,u} \). Since the order tree is an undirected graph, the weights on the trace extracted from the undirected graph obey the commutative law for their summation; \\
\textbf{Triangle inequality}: \(d_{r,u,s} \le d_{r,u,g} + d_{r,g,s}\). The order trace is the unique path between two values with length \(d_{r,u,s}\). From $v_{r,u}$ to $v_{r,s}$, detour another node $v_{r,g}$ other than the trace $T_{r,u,s}$ necessarily involves extra weight(s) from the other traces. Given that each weight is non-negative, the result follows.
\end{proof}

\begin{theorem}\label{lem:sample_cluster_distance}
The sample-cluster distance \( \Gamma (\mathbf{x}_i, C_j; M) \) defined in the context of the order forest \( M \) represents a valid distance metric.
\end{theorem}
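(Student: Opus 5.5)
The plan is to reduce every property of $\Gamma(\mathbf{x}_i,C_j;M)$ to the corresponding property of the trace distance $d_{r,u,s}$, which Theorem~\ref{thm:d_i_r_g} already establishes. The key observation is that Eq.~(\ref{eq:dist_r}) is an expectation:
\[
\gamma(x_{i,r},C_{j,r};M_r)=\sum_{s=1}^{o_r}p_{v_{r,s}|C_j}\,d_{r,u,s}.
\]
Here the coefficients $p_{v_{r,s}|C_j}$ are non-negative and sum to one over $s$. So $\gamma$ is the expected trace distance from $x_{i,r}=v_{r,u}$ to a value drawn from the empirical distribution of $\mathbf{a}_r$ inside $C_j$.

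To make symmetry and the triangle inequality meaningful for a sample--cluster quantity, I would place samples and clusters in one common setting. A sample $\mathbf{x}_i$ is identified with the one-hot distribution $\mathbf{e}_u$ on $V_r$, for each attribute $r$. Writing $D_r=[d_{r,u,s}]$ for the trace-distance matrix, this gives $\gamma=\mathbf{e}_u^\top D_r\mathbf{p}_{j,r}$. More generally, for any two objects $a,b$ (samples or clusters) with per-attribute distributions $\mathbf{p}_{a,r},\mathbf{p}_{b,r}$, I would define
\[
\Gamma(a,b;M)=\sum_{r=1}^{l}\mathbf{p}_{a,r}^\top D_r\,\mathbf{p}_{b,r},
\]
which agrees with Eq.~(\ref{eq:dist_final}) when $a$ is a sample. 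The three properties are then checked attribute by attribute and summed over $r$.

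\textbf{Non-negativity.} This is immediate, since all entries of $D_r$ and all probabilities are non-negative.

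\textbf{Symmetry.} This follows from $D_r=D_r^\top$, which is the symmetry part of Theorem~\ref{thm:d_i_r_g}. Hence $\mathbf{p}_{a,r}^\top D_r\mathbf{p}_{b,r}=\mathbf{p}_{b,r}^\top D_r\mathbf{p}_{a,r}$.

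\textbf{Reflexivity.} The diagonal vanishes: a sample compared with itself (as a point mass) gives $d_{r,u,u}=0$, because the order trace from a node to itself is empty.

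\textbf{Triangle inequality.} For any three objects $a,b,c$, I would start from the pointwise inequality $d_{r,u,s}\le d_{r,u,g}+d_{r,g,s}$ of Theorem~\ref{thm:d_i_r_g}. Multiply it by the product weight $p_{a,r}(u)\,p_{b,r}(g)\,p_{c,r}(s)\ge 0$ and sum over all $u,g,s$. Because each distribution sums to one, the marginalisation over the index absent from each term collapses it to
\[
\mathbf{p}_{a,r}^\top D_r\mathbf{p}_{c,r}\le \mathbf{p}_{a,r}^\top D_r\mathbf{p}_{b,r}+\mathbf{p}_{b,r}^\top D_r\mathbf{p}_{c,r}.
\]
Summing over $r$ yields $\Gamma(a,c)\le\Gamma(a,b)+\Gamma(b,c)$. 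This covers the cases actually used in clustering, such as $\Gamma(\mathbf{x}_i,C_j)\le\Gamma(\mathbf{x}_i,\mathbf{x}_h)+\Gamma(\mathbf{x}_h,C_j)$.

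The main obstacle is not computational but definitional: fixing exactly what ``valid distance metric'' means for a sample--cluster function. In particular, $\Gamma(C_j,C_j)$ is generally positive for a non-degenerate cluster. I would therefore state the result as a metric on samples, namely $\Gamma(\mathbf{x}_i,\mathbf{x}_h)=\sum_r d_{r,u,u'}$, extended by averaging to clusters. This extension is non-negative and symmetric and satisfies the triangle inequality, with zero self-distance holding for point masses. I would explicitly note that for general clusters it is a (pseudo-)metric in this averaged sense.
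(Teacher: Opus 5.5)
Your proof is correct. It shares the paper's skeleton: reduce to Theorem~\ref{thm:d_i_r_g} through the non-negative probability weights in $\mathbf{p}_{j,r}$. The difference is that you actually carry out the step the paper only asserts.

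The paper's one-sentence argument says a non-negatively weighted sum of trace distances inherits non-negativity, symmetry and the triangle inequality. That principle is valid only for a fixed combination of metrics evaluated on the same pair of points. It covers the outer sum over attributes $r$, but not the inner sum over $s$. That inner sum averages over the support of the cluster argument, with weights that depend on the cluster. The paper also never says on what common domain symmetry, or a triangle inequality, for a sample--cluster function is meant to hold.

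Your embedding of samples and clusters as per-attribute distributions, with $\Gamma(a,b;M)=\sum_r\mathbf{p}_{a,r}^\top D_r\mathbf{p}_{b,r}$, supplies that domain. Weighting $d_{r,u,s}\le d_{r,u,g}+d_{r,g,s}$ by $p_{a,r}(u)\,p_{b,r}(g)\,p_{c,r}(s)$ and marginalising is exactly the missing justification. It also extends the result to cluster--cluster comparisons.

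The paper's version buys brevity. Yours buys a precise statement, plus a caveat the paper omits: $\Gamma(C_j,C_j;M)=\sum_r\mathbf{p}_{j,r}^\top D_r\mathbf{p}_{j,r}$ is positive as soon as $C_j$ holds two values of some attribute at positive trace distance.

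Only your closing terminology needs adjusting. A pseudometric still requires zero self-distance, so the extension to clusters is not a pseudometric. On samples alone you get a pseudometric rather than a metric. Two distinct values with identical cluster profiles $\mathbf{p}_{v_{r,u}}=\mathbf{p}_{v_{r,s}}$ have $w_{r,u,s}=0$, and the cycle property of minimum spanning trees then forces $d_{r,u,s}=0$. Neither point affects the three properties that the paper's notion of ``valid distance metric'' actually requires.
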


\begin{proof}
The computation of \( \Gamma (\mathbf{x}_i, C_j; M) \) can be viewed as the weighted sum of a series of $d_{r,u,s}$s in $\mathbf{d}_{r,u}$s with non-negative weights represented by the probabilities in $\mathbf{p}_{j,r}$s according to Eqs.~(\ref{eq:dist_r}) and~(\ref{eq:dist_final}). Since trace distance $d_{r,u,s}$ is a metric according to Theorem~\ref{thm:d_i_r_g}, \( \Gamma (\mathbf{x}_i, C_j; M) \) is also a metric following non-negativity, symmetry, and triangle inequality.
\end{proof}
\vspace{-8pt}
\subsection{Joint Learning Algorithm}\label{sct:joint}
Joint learning of cluster and order forest is facilitated by integrating the order forest construction mechanism presented in Sections~\ref{sct:Method_OFC} and \ref{sct:Method_OFD} into the clustering objective. Accordingly, $L(\mathbf{Q})$ can be refined to $L(\mathbf{Q}, M)$ based on Eqs.~(\ref{eq:obj_pre}), (\ref{eq: tree-order_distance}), (\ref{eq:dist_r}), and (\ref{eq:dist_final}):
\begin{equation}\label{eq:obj}
    L(\mathbf{Q},M)=\sum_{j=1}^k\sum_{i=1}^nq_{i,j}\cdot\sum_{r=1}^{l}\gamma(x_{i,r},C_{j,r}; M_r).
\end{equation}
Then the problem becomes how to compute $\mathbf{Q}$ and $M$ to minimize $L$, which is typically solved by iteratively fixing one and computing another. Specifically, given fixed distance structure $\hat{M}$, $\mathbf{Q}$ can be computed to minimize $L(\mathbf{Q},\hat{M})$ by
\begin{equation}
	\label{eq:qim}
	q_{i,j}=\left\{
	\begin{array}{ll}
		1,  & \text{if}\ j=\arg\min\limits_y\sum_{r=1}^{l}\gamma(x_{i,r},C_{y,r}; M_r)\\
		0,  & \text{otherwise.}\\
	\end{array}\right.
\end{equation}
with $i=\{1,2,...,n\}$ and $j=\{1,2,...,k\}$. Eq.~(\ref{eq:qim}) is strictly derived from Eq.~(\ref{eq:qim_pre}) by adopting order forest $\hat{M}$ as its distance structure. After the $\mathbf{Q}$ is computed, we fix it as $\hat{\mathbf{Q}}$ and then reconstruct $M$ according to Figure~\ref{fig:pipleline_figure} and Eqs.~(\ref{eq:compute_w}) - (\ref{eq: tree-order_distance}).

In summary, $L$ is optimized by iteratively solving the two minimization problems: 1) Fix $\hat{M}$, run $k$-modes~\cite{kmd} to iteratively compute $\mathbf{Q}$ until convergences; 2) Fix $\hat{\mathbf{Q}}$, reconstruct $M$ to update the distance metric. With a finite state space of $M$, the states will gradually be exhausted during the iterative searching, and thus the convergence of the algorithm can be guaranteed. The whole algorithm is summarized as Algorithm~\ref{alg:COForest}.
\begin{algorithm}[!t]
    \label{alg:COForest}
    \SetAlgoLined 
    \caption{\footnotesize{COForest: Clustering with Order Forest Learning}}
    \SetKwInOut{Require}{Require}
    \SetKwInOut{Ensure}{Ensure}
    \Require {Dataset $X$, number of sought clusters $k$}
    \Ensure {Partition $\mathbf{Q}$, order forest $M$}
    Initialization: Set outer and inner loop counters by ${\cal E}\leftarrow 0$ and ${\cal I}\leftarrow 0$; Run $k$-modes~\cite{kmd} to obtain a relatively stable initial $\mathbf{Q}^{\{\cal E\}}$; Construct initial $M^{\{\cal E\}}$ according to $\mathbf{Q}^{\{\cal E\}}$; 
    Set convergence mark for outer loop by $Conv\_{\cal E}\leftarrow False$.\\
    \While{$Conv\_{\cal E}=False$}{
        $Conv\_{\cal I}\leftarrow False$;\\
    \While{$Conv\_{\cal I}=False$}{
        ${\cal I}\leftarrow {\cal I}+1$; Compute $\mathbf{Q}^{\{{\cal I}\}}$ by Eq.~(\ref{eq:qim});\\    
        \If {$\mathbf{Q}^{\{{\cal I}\}}=\mathbf{Q}^{\{{\cal I}-1\}}$}{
                $Conv\_{\cal I}\leftarrow True$;
               }
               }
        \eIf {$\mathbf{Q}^{\{{\cal E}\}}=\mathbf{Q}^{\{{\cal I}\}}$}{$Conv\_{\cal E}\leftarrow True$;}{
        ${\cal E}\leftarrow {\cal E}+1$; $\mathbf{Q}^{\{{\cal E}\}}\leftarrow \mathbf{Q}^{\{{\cal I}\}}$; Reconstruct $M^{\{{\cal E}\}}$; 
        }
    }
\end{algorithm}

\begin{theorem}
    \label{theorem:time_complex}
	Time complexity of COForest is $O(nlk{\cal I}{\cal E})$
\end{theorem}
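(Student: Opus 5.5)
I will bound the cost of the two alternating sub-problems of Algorithm~\ref{alg:COForest} separately and then multiply by the loop counts. Here ${\cal I}$ is the total number of inner ($\mathbf{Q}$-update) iterations and ${\cal E}$ the number of outer (forest reconstruction) iterations. Throughout, the numbers of possible values $o_r$ are treated as small constants with $o_r\ll n$; this assumption is standard for qualitative attributes and I will state it explicitly.

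\textbf{Inner step.} Computing $\mathbf{Q}$ by Eq.~(\ref{eq:qim}) requires $\Gamma(\mathbf{x}_i,C_j;M)$ for all $n$ samples and $k$ clusters. Each $\Gamma$ is a sum over $l$ attributes of $\gamma(x_{i,r},C_{j,r};M_r)={\bf p}_{j,r}^\top{\bf d}_{r,u}$.

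The key point is that $\gamma$ depends on $\mathbf{x}_i$ only through its value index $u$. I will therefore precompute a table of $\gamma$ for every triple $(r,u,j)$ once per inner iteration, at cost $O(k\sum_r o_r^2)$. The vectors ${\bf p}_{j,r}$ are obtained in one $O(nl)$ counting pass over $X$. After this, each $\Gamma$ costs $O(l)$ by table lookup, so the assignment step costs $O(nlk)$. The table cost $O(k\sum_r o_r^2)$ is dominated by $O(nlk)$ under the assumption on $o_r$. Summing over all inner iterations gives $O(nlk{\cal I})$.

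\textbf{Outer step.} Reconstructing $M$ proceeds in three parts.
\begin{itemize}
\item Compute all ${\bf p}_{v_{r,u}}$ by counting, which is $O(nl)$.
\item Compute the fully connected edge weights of Eq.~(\ref{eq:compute_w}), which is $O(k\sum_r o_r^2)$.
\item Run Prim on each ${\cal G}_r$ in $O(o_r^2)$, then obtain all trace distances $d_{r,u,s}$ by a tree traversal from each node in $O(o_r^2)$.
\end{itemize}
This totals $O(nl + k\sum_r o_r^2)=O(nlk)$ per outer iteration, which is not larger than one inner iteration. The initial $k$-modes run and the initial forest are a single additional inner phase, so they are absorbed in the same bound.

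\textbf{Combining the bounds.} The bound I obtain is $O(nlk{\cal I}+nlk{\cal E})$. Since each outer iteration contains at least one inner iteration, ${\cal E}\le{\cal I}$, and this is trivially within the stated $O(nlk{\cal I}{\cal E})$. That statement presumably reads ${\cal I}$ as the number of inner iterations per outer loop, in which case the product form follows directly.

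\textbf{Main obstacle.} The delicate step is justifying that the per-pair cost of $\gamma$ is $O(1)$ rather than $O(o_r)$. The approach is the precomputed table indexed by value rather than by sample, together with the explicit assumption that $\sum_r o_r^2$ is $O(nl)$ (or that the $o_r$ are bounded constants). Without that assumption an extra factor $\max_r o_r$ would appear.
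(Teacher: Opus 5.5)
Your argument is correct and proves the statement (in fact a sharper form of it), but your accounting differs from the paper's in two respects.

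First, the paper charges every inner iteration $O(nkl\varsigma)$, where $\varsigma=\max_r o_r$. It rebuilds $\mathbf{P}$ at cost $O(nkl\varsigma)$ and $\mathbf{D}$ at cost $O(nl\varsigma)$ in each inner iteration, and it evaluates each $\mathbf{p}_{j,r}^\top\mathbf{d}_{r,u}$ directly at $O(\varsigma)$ per sample--cluster--attribute triple. It then charges each forest reconstruction $O(nl\varsigma^2)$ and arrives at $O({\cal E}({\cal I}nkl\varsigma+nl\varsigma^2))$. The $\varsigma$ factors disappear only because $\varsigma$ is declared a constant. Your table indexed by $(r,u,j)$ removes the $\varsigma$ factor from the assignment step outright. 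You therefore need only $\sum_r o_r^2=O(nl)$ rather than bounded $o_r$. You also correctly compute the trace distances once per outer iteration, since $M$ is fixed inside the inner loop.

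Second, the paper reads ${\cal I}$ as the number of inner iterations per outer pass, so its product ${\cal I}{\cal E}$ comes from nesting the loops. That is exactly the reading you anticipate at the end of your combining step. If ${\cal I}$ is instead the cumulative count (the pseudo-code never resets it), your additive bound $O(nlk({\cal I}+{\cal E}))=O(nlk{\cal I})$ is tighter. It implies the stated bound whenever ${\cal E}\ge 1$.

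The paper's route is shorter because it absorbs every $\varsigma$ into the constant. Yours gives a sharper bound under a weaker hypothesis on the attribute cardinalities.
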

\textit{Proof.} To more intuitively provide the proof, we first define the probability $\mathbf{P}$ and trace distance matrix $\mathbf{D}$, and assume that $\varsigma=\max(o_1,o_2,...,o_l)$ for worst-case analysis. $\mathbf{P}$ is a \(k \times l\) probability matrix with its \((j,r)\)-th entry $\mathbf{p}_{j,r}=[p_{v_{r,1}|C_j},p_{v_{r,2}|C_j},...,p_{v_{r,o_r}|C_j}]$. \(\mathbf{D}\) is an \(l \times \varsigma\) trace distance matrix, and its \((r,u)\)-th entry is $\mathbf{d}_{r,u}=[d_{r,u,1},d_{r,u,2},...,d_{r,u,o_r}]$.

Assume solving problem $L(\mathbf{Q},\hat{M})$ involves ${\cal I}$ iterations to compute $\mathbf{Q}$ and $\mathbf{P}$, and the whole algorithm involves ${\cal E}$ iterations to construct $M$ and update $\mathbf{D}$ for solving $L(\hat{\mathbf{Q}},M)$. 

For each iteration of ${\cal I}$, $\mathbf{P}$ should be prepared by going through all the $n$ data samples once with complexity $O(nkl\varsigma)$, and $\mathbf{D}$ should be prepared by going through all the $\varsigma$ values of $l$ attributes once on $n$ samples with complexity $O(nl\varsigma)$. Then the $n$ samples are clustered to $k$ clusters by considering $\varsigma$ values of $l$ attributes according to Eq.~(\ref{eq:qim}), with time complexity $O(nkl\varsigma)$. Therefore, the time complexity of solving $L(\mathbf{Q},\hat{M})$ in a total of ${\cal I}$ iterations is $O({\cal I}nkl\varsigma)$.

For each iteration of ${\cal E}$, since $\mathbf{P}$ and $\mathbf{D}$ have been prepared, order tree of $\varsigma$ possible values of each of the $l$ attributes can be searched by constructing $M_{r}$ with time complexity $O(n l \varsigma^2)$. For ${\cal E}$ iterations of the whole COForest algorithm, considering the ${\cal I}$ inner iterations, the overall time complexity of COForest is $O({\cal E}({\cal I}nkl\varsigma+n l \varsigma^2))$.

Since $\varsigma$ is a small integer ranging from 2 to 8 in most cases, it can be treated as a constant, and the overall time complexity can be simplified to $O(nlk{\cal I}{\cal E})$, which is linear to $n$ and $l$. \hfill\Square

\vspace{-8pt}
\section{Experiments}
\label{sct:EXP}

Five experiments are designed to evaluate the proposed COForest by comparing it with 10 counterparts on 12 real public datasets using three validity metrics. The experiments are  summarized below:
\vspace{-5pt}
\begin{itemize}
\item Clustering performance comparisons with significance tests illustrate that COForest significantly outperforms the conventional and state-of-the-art counterparts (Section~\ref{sct: cpAsig}).
\item Ablation studies comparing five ablated versions of COForest confirm the effectiveness of each of the core components of COForest (Section~\ref{sct: ablation}). 
\item Convergence and efficiency of COForest are demonstrated by plotting the objective function values during learning and execution time under different dataset scales, respectively (Section~\ref{sct: cAe}).
\item Reasonableness of the learned distance structure is well confirmed by qualitatively comparing the cluster discrimination ability of different methods using t-SNE (Section~\ref{sct: qa}).
\item The potential of extending COForest to mixed data with numerical and categorical attributes is validated by comparing its clustering performance with those specifically proposed for mixed data (please refer to the ``Supplementary Material'' provided by~\cite{code}). 
\end{itemize}
\vspace{-8pt}
\subsection{Experimental Setup}
\label{sct: setup}

\begin{table}[!t]
\caption{Information of the 10 counterparts. ``Type'' indicates whether a method separates or jointly learns the distance definition and clustering.}
\label{tb:method_statistics}
\centering
\begin{tabular}{c|rcc}
\toprule
No. &Counterpart& Year & Type\\
\midrule
1& KMD \cite{kmd} & 1998 & Separate \\
2& LSM \cite{lsm} & 1998 & Separate \\
3& JDM \cite{jdm} & 2016 & Separate \\
4& CBDM \cite{cbdm_journal} & 2012 & Separate \\
5& OCIL \cite{ocil} & 2013 & Joint \\
6& UDMC \cite{udm} & 2022 & Separate \\
7& DLC \cite{dlc} & 2020 & Joint \\
8& H2H \cite{het2hom} & 2022 & Joint \\
9& HDC \cite{HDC} & 2022 & Joint \\
10& ADC \cite{adc} & 2023 & Separate \\
\bottomrule
\end{tabular}
\end{table}

Experimental settings are briefly described below.

\textbf{10 Counterparts} are sorted out in Table~\ref{tb:method_statistics}. 
We set their hyper-parameters (if any) to the values recommended by the corresponding papers. Each method is implemented 10 times and the average performance is reported.

\textbf{12 Datasets} from various domains are utilized for the experiments. All the datasets are real public datasets collected from the UCI Machine Learning Repository~\cite{uci}, and the statistical information is shown in Table~\ref{tb:statistics}. Before the experiments, we preprocess the datasets by removing samples with missing values. Since we focus on categorical data clustering, numerical attributes in AC, TS, HF, and DS datasets are omitted. For all the compared methods, we set $k=k^*$ as the sought number of clusters. 

\begin{table}[!t]
\caption{Statistics of the 12 datasets. $l$ and $n$ are the numbers of attributes and samples, respectively. $k^*$ is the true number of clusters.}
\label{tb:statistics}
\centering
\begin{tabular}{r|cc|ccc}
\toprule
No. & Dataset & Abbrev. & $l$ & $n$ \ \ \ & $k^*$ \\
\midrule
1& Hayes-Roth & HR &4 & 132 & 3 \\
2& Car Evaluation & CE & 6 & 1728 & 4\\
3& Australia Credit & AC & 8 & 690 & 2 \\
4& Congressional Voting & VT & 16 & 435 & 2 \\	
5& Caesarian Section & CS & 4 & 80 & 2 \\	
6& Soybean (small) & SB &35 & 47 & 4 \\	
7& Nursery School & NS & 8 & 12960 & 4 \\
8& Zoo & ZO &16 & 101 & 7 \\		
9& Thoracic Surgery & TS & 13 & 470 & 2 \\	
10& Heart Failure & HF &5& 299 & 2 \\
11& Inflammations Diagnosis & DS &5 & 120 & 2 \\	 
12& Lenses & LS & 4 & 24 & 3\\
\bottomrule
\end{tabular}
\end{table}

\textbf{Three Evaluation Metrics} include the clustering accuracy (CA)~\cite{ex1}, Adjusted Rand Index (ARI)~\cite{ex3, ex4}, and Normalized Mutual Information (NMI)~\cite{ex5}, are adopted for evaluating clustering performance from different perspectives. Among them, CA is a conventional index, which computes the matching rate based on the best permutation mapping between the obtained clusters and the true classes. In contrast, ARI and NMI are more discriminative, being in value intervals \([-1,1]\), and \([0,1]\), respectively. For all the indices, a higher value indicates a better clustering performance. NMI results
are provided in the ``Supplementary Material''~\cite{code}.
\vspace{-8pt}

\subsection{Clustering Performance}
\label{sct: cpAsig}
In this section, we investigate the clustering performance of different algorithms and statistically analyze the superiority of COForest.

\begin{table*}[!t]
\caption{Clustering performance evaluated by CA. ``$\overline{\textit{AR}}$'' row reports the average performance rankings.}
\label{tb:clustering_ca}
\centering
\resizebox{2.06\columnwidth}{!}{
\begin{tabular}{l|ccccccccccc}
\toprule
Data & KMD &LSM& JDM & CBDM & OCIL & UDMC & DLC & H2H & HDC & ADC & COForest (ours)\\
\midrule

HR& ~0.3795$\pm$0.02	& ~0.3826$\pm$0.03	& ~0.3841$\pm$0.03	& ~\underline{0.4083$\pm$0.06}	& ~0.3621$\pm$0.05	& ~0.3886$\pm$0.01	& ~0.3659$\pm$0.03	& ~0.3333$\pm$0.00	& ~0.3758$\pm$0.02	& ~0.3970$\pm$0.05	& ~\textbf{0.4530$\pm$0.07}\\	
CE& ~0.3730$\pm$0.04	& ~0.3587$\pm$0.04	& ~0.3597$\pm$0.04	& ~-	& ~0.3659$\pm$0.05	& ~0.3505$\pm$0.03	& ~\underline{0.3746$\pm$0.04}	& ~0.3354$\pm$0.06	& ~0.3730$\pm$0.04	& ~0.3730$\pm$0.04	& ~\textbf{0.4261$\pm$0.06}\\
AC& ~0.7494$\pm$0.05	& ~0.7823$\pm$0.04	& ~0.6858$\pm$0.12	& ~0.7417$\pm$0.08	& ~0.7781$\pm$0.10	& ~0.7674$\pm$0.08	& ~0.7499$\pm$0.14	& ~\underline{0.7942$\pm$0.00}	& ~0.7484$\pm$0.09	& ~0.7709$\pm$0.09	& ~\textbf{0.8307$\pm$0.05}\\	
VT& ~0.8621$\pm$0.01	& ~0.8662$\pm$0.00	& ~0.8662$\pm$0.00	& ~0.8749$\pm$0.00	& ~\textbf{0.8763$\pm$0.00}	& ~0.8639$\pm$0.00	& ~0.8540$\pm$0.08	& ~0.8736$\pm$0.00	& ~0.8736$\pm$0.00	& ~0.8713$\pm$0.00	& ~\underline{0.8761$\pm$0.00}\\	
CS& ~0.5475$\pm$0.02	& ~0.5425$\pm$0.05	& ~0.5475$\pm$0.05	& ~0.5787$\pm$0.03	& ~0.5037$\pm$0.18	& ~0.5788$\pm$0.03	& ~0.6013$\pm$0.04	& ~\underline{0.6050$\pm$0.02}	& ~0.5862$\pm$0.03	& ~0.5875$\pm$0.02	& ~\textbf{0.6450$\pm$0.02}\\	
SB& ~0.8191$\pm$0.18	& ~0.8553$\pm$0.19	& ~0.7830$\pm$0.16	& ~0.8213$\pm$0.15	& ~0.7936$\pm$0.33	& ~0.8426$\pm$0.17	& ~0.8723$\pm$0.17	& ~\underline{0.9511$\pm$0.10}	& ~0.8128$\pm$0.13	& ~0.8191$\pm$0.16	& ~\textbf{0.9723$\pm$0.09}\\	
NS& ~0.3454$\pm$0.04	& ~0.3171$\pm$0.04	& ~0.3064$\pm$0.03	& -	& ~\underline{0.3454$\pm$0.08}	& ~0.3235$\pm$0.04	& ~0.3301$\pm$0.06	& ~0.3441$\pm$0.05	& ~0.3454$\pm$0.04	& ~0.3454$\pm$0.04	& ~\textbf{0.3626$\pm$0.09}\\	
ZO& ~0.6564$\pm$0.10	& ~0.6594$\pm$0.10	& ~\underline{0.7149$\pm$0.09}	& ~0.6921$\pm$0.08	& ~0.5663$\pm$0.31	& ~0.6564$\pm$0.09	& ~0.7020$\pm$0.10	& ~0.6980$\pm$0.04	& ~0.6713$\pm$0.12	& ~0.6812$\pm$0.11	& ~\textbf{0.7832$\pm$0.12}\\	
TS& ~0.7083$\pm$0.08	& ~0.6717$\pm$0.08	& ~0.7023$\pm$0.10	& ~0.6957$\pm$0.09	& ~0.6689$\pm$0.08	& ~0.7087$\pm$0.09	& ~0.6868$\pm$0.08	& ~0.5723$\pm$0.03	& ~\underline{0.7104$\pm$0.08}	& ~0.6947$\pm$0.10	& ~\textbf{0.7232$\pm$0.09}\\	
HF& ~0.5344$\pm$0.03	& ~0.5344$\pm$0.03	& ~0.5421$\pm$0.03	& ~\underline{0.5498$\pm$0.02}	& ~0.4880$\pm$0.17	& ~0.5378$\pm$0.02	& ~0.5381$\pm$0.02	& ~0.5441$\pm$0.05	& ~0.5388$\pm$0.02	& ~0.5378$\pm$0.02	& ~\textbf{0.5532$\pm$0.03}\\	
DS& ~0.6833$\pm$0.11	& ~0.6833$\pm$0.11	& ~0.6975$\pm$0.11	& ~0.7142$\pm$0.12	& ~\underline{0.7242$\pm$0.16}	& ~0.6725$\pm$0.11	& ~\underline{0.7242$\pm$0.16}	& ~0.6267$\pm$0.04	& ~0.6725$\pm$0.11	& ~0.6975$\pm$0.11	& ~\textbf{0.7617$\pm$0.08}\\	
LS& ~0.5250$\pm$0.07	& ~0.5417$\pm$0.10	& ~0.5417$\pm$0.10	& -	& ~0.5417$\pm$0.08	& ~\underline{0.5792$\pm$0.14}	& ~0.5500$\pm$0.09	& ~0.5167$\pm$0.09	& ~0.5250$\pm$0.07	& ~0.5250$\pm$0.07	& ~\textbf{0.6833$\pm$0.14}\\	
		
\midrule	
$\overline{\textit{AR}}$ & 7.2500    &7.2083    &6.7917    &6.4583    &7.0833    &6.5833    &5.3750    &6.3750    &6.2083    &5.5833    &1.0833 \\	
\bottomrule
\end{tabular}}
\end{table*}

\begin{table*}[!t]
\caption{Clustering performance evaluated by ARI. ``$\overline{\textit{AR}}$'' row reports the average performance rankings.}
\label{tb:clustering_ari}
\centering
\resizebox{2.06\columnwidth}{!}{
\begin{tabular}{l|ccccccccccc}
\toprule
Data & KMD &LSM& JDM & CBDM & OCIL & UDMC & DLC & H2H & HDC & ADC & COForest (ours)\\
\midrule
HR& -0.0064$\pm$0.01	& -0.0051$\pm$0.01	& -0.0048$\pm$0.01	& ~\underline{0.0127$\pm$0.03}	& -0.0073$\pm$0.02	& -0.0037$\pm$0.00	& -0.0068$\pm$0.01	& -0.0149$\pm$0.00	& -0.0056$\pm$0.01	& ~0.0043$\pm$0.03	& ~\textbf{0.0429$\pm$0.04}\\	
CE& ~0.0229$\pm$0.03	& ~0.0314$\pm$0.02	& ~0.0321$\pm$0.02	& -	& ~0.0501$\pm$0.06	& ~0.0289$\pm$0.02	& ~\underline{0.0676$\pm$0.03}	& ~0.0140$\pm$0.03	& ~0.0229$\pm$0.03	& ~0.0229$\pm$0.03	& ~\textbf{0.1016$\pm$0.07}\\		
AC& ~0.2575$\pm$0.10	& ~0.3228$\pm$0.07	& ~0.1892$\pm$0.17	& ~0.2569$\pm$0.11	& ~0.3421$\pm$0.15	& ~0.3107$\pm$0.11	& ~0.3178$\pm$0.22	& ~\underline{0.3453$\pm$0.00}	& ~0.2714$\pm$0.12	& ~0.3225$\pm$0.12	& ~\textbf{0.4462$\pm$0.12}\\	
VT& ~0.5233$\pm$0.02	& ~0.5354$\pm$0.01	& ~0.5354$\pm$0.01	& ~0.5613$\pm$0.01	& ~\textbf{0.5655$\pm$0.01}	& ~0.5287$\pm$0.01	& ~0.5208$\pm$0.18	& ~0.5572$\pm$0.00	& ~0.5572$\pm$0.00	& ~0.5503$\pm$0.00	& ~\underline{0.5647$\pm$0.00}\\	
CS& -0.0033$\pm$0.01	& ~0.0017$\pm$0.03	& ~0.0038$\pm$0.03	& ~0.0137$\pm$0.01	& ~0.0070$\pm$0.03	& ~0.0140$\pm$0.02	& ~\underline{0.0342$\pm$0.03}	& ~0.0319$\pm$0.02	& ~0.0191$\pm$0.02	& ~0.0190$\pm$0.01	& ~\textbf{0.0732$\pm$0.02}\\	
SB& ~0.7657$\pm$0.20	& ~0.8164$\pm$0.24	& ~0.6826$\pm$0.22	& ~0.7652$\pm$0.21	& ~0.7902$\pm$0.33	& ~0.8232$\pm$0.19	& ~0.8500$\pm$0.20	& ~\underline{0.9271$\pm$0.16}	& ~0.7595$\pm$0.18	& ~0.7863$\pm$0.19	& ~\textbf{0.9562$\pm$0.14}\\	
NS& ~0.0630$\pm$0.02	& ~0.0556$\pm$0.02	& ~0.0457$\pm$0.02	& -	& ~\underline{0.1146$\pm$0.10}	& ~0.0617$\pm$0.03	& ~0.0886$\pm$0.08	& ~0.0847$\pm$0.09	& ~0.0630$\pm$0.02	& ~0.0630$\pm$0.02	& ~\textbf{0.1352$\pm$0.13}\\	
ZO& ~0.5707$\pm$0.13	& ~0.5872$\pm$0.15	& ~\underline{0.6496$\pm$0.14}	& ~0.6187$\pm$0.12	& ~0.5093$\pm$0.29	& ~0.5937$\pm$0.15	& ~0.6315$\pm$0.12	& ~0.6255$\pm$0.06	& ~0.6010$\pm$0.15	& ~0.6128$\pm$0.15	& ~\textbf{0.7511$\pm$0.18}\\	
TS& ~0.0054$\pm$0.05	& ~0.0123$\pm$0.05	& ~0.0188$\pm$0.05	& ~0.0171$\pm$0.04	& -0.0048$\pm$0.05	& ~\underline{0.0198$\pm$0.05}	& -0.0034$\pm$0.04	& -0.0249$\pm$0.00	& ~0.0084$\pm$0.05	& ~0.0031$\pm$0.04	& ~\textbf{0.0220$\pm$0.04}\\	
HF& -0.0067$\pm$0.00	& -0.0067$\pm$0.00	& -0.0013$\pm$0.01	& -0.0009$\pm$0.01	& \underline{-0.0002$\pm$0.00}	& -0.0023$\pm$0.00	& -0.0005$\pm$0.00	& -0.0043$\pm$0.00	& -0.0019$\pm$0.00	& -0.0023$\pm$0.00	& ~\textbf{0.0045$\pm$0.01}\\	
DS& ~0.1697$\pm$0.18	& ~0.1697$\pm$0.18	& ~0.1944$\pm$0.19	& ~0.2280$\pm$0.20	& ~\underline{0.2839$\pm$0.32}	& ~0.1543$\pm$0.18	& ~\underline{0.2839$\pm$0.32}	& ~0.0615$\pm$0.04	& ~0.1543$\pm$0.18	& ~0.1944$\pm$0.19	& ~\textbf{0.2901$\pm$0.16}\\	
LS& ~0.0756$\pm$0.10	& ~0.1180$\pm$0.15	& ~0.1180$\pm$0.15	& -	& ~0.1287$\pm$0.12	& ~\underline{0.1919$\pm$0.22}	& ~0.1379$\pm$0.15	& ~0.0786$\pm$0.11	& ~0.0756$\pm$0.10	& ~0.0756$\pm$0.10	& ~\textbf{0.3359$\pm$0.22}\\	
\midrule	
$\overline{\textit{AR}}$ & 8.6667    &7.0000    &6.5417    &6.7500    &5.2083    &6.0833    &4.7083    &6.5417    &7.0833    &6.3333    &1.0833
 \\	
\bottomrule
\end{tabular}}
\end{table*}

\textbf{Clustering performance of different methods} are compared in Tables \ref{tb:clustering_ca} and \ref{tb:clustering_ari} w.r.t. CA and ARI, respectively. The best and second-best results on each dataset are highlighted in \textbf{bold} and \underline{underline}, respectively. The observations include the following three aspects: 1) Overall, COForest performs best on almost all datasets, indicating its superiority in clustering. 2) The performance of COForest on the TS and HF datasets is not obviously better than the second-best method. However, the second-best method varies on these datasets, indicating the robustness of COForest. 3) Although COForest does not have the best CA and ARI performance on the VT dataset, it maintains the second-best and is not surpassed by much by the winners. In addition, the results of CBDM on CE, NS, and LS datasets are not reported because the attributes of these datasets are independent of each other, making CBDM fails in measuring distances according to the correlated attributes.

\textbf{Significance tests} are conducted by first implementing Friedman tests on the average performance ranks reported in the last rows in Tables~\ref{tb:clustering_ca} and~\ref{tb:clustering_ari}, respectively. The corresponding p-values are \(0.00020\) and \(0.00002\), respectively, both passing the test under 99\% confidence interval (i.e., p-value = 0.01). On this basis, Bonferroni Dunn (BD) post-hoc tests are implemented. Critical Difference (CD) intervals for the two-tailed BD tests at 95\% ($\alpha$= 0.05) and 90\% ($\alpha$ = 0.1) confidence intervals are 3.8048 and 3.5204, respectively, for comparing 11 methods across 12 datasets. As can be seen from the ``$\overline{\textit{AR}}$'' rows in Tables~\ref{tb:clustering_ca} and~\ref{tb:clustering_ari} that all compared methods fall outside the right boundary of the CD intervals, except for the DLC method w.r.t. ARI performance under $\alpha=0.05$. But it is worth mentioning that DLC is very close to the boundary of $\alpha=0.05$ and stays outside the boundary of $\alpha=0.1$. In general, the test results indicate that the proposed COForest significantly outperforms the other counterparts.
\vspace{-8pt}
\subsection{Ablation Study}
\label{sct: ablation}

\begin{figure}[!t]	
\begin{minipage}{0.32\linewidth}	
  \centerline{\includegraphics[width=1.1in]{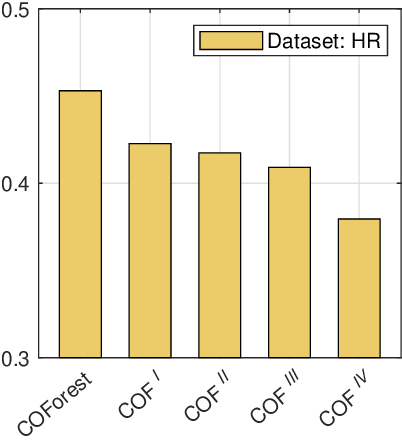}}	
\end{minipage}	
\hfill	
\begin{minipage}{0.32\linewidth}	
  \centerline{\includegraphics[width=1.1in]{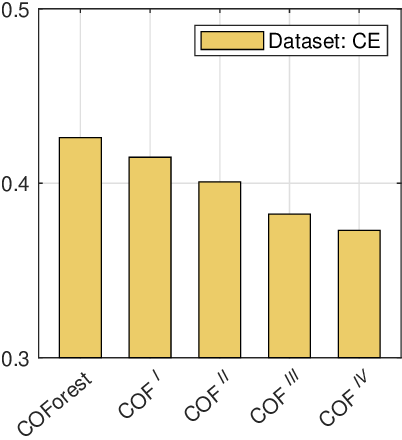}}	
\end{minipage}	
\hfill	
\begin{minipage}{0.32\linewidth}	
  \centerline{\includegraphics[width=1.1in]{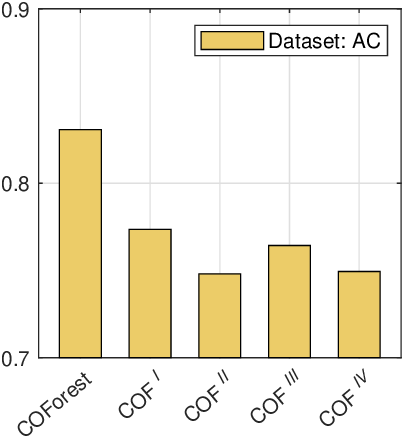}}	
\end{minipage}	
\vfill	
\begin{minipage}{0.32\linewidth}	
  \centerline{\includegraphics[width=1.1in]{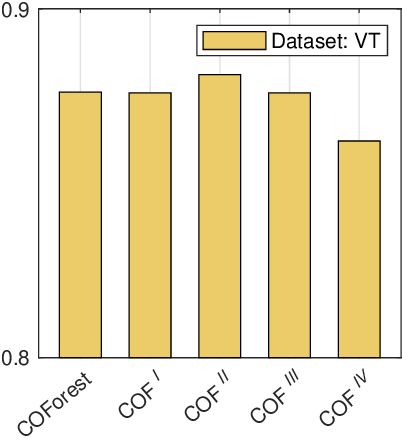}}	
\end{minipage}	
\hfill	
\begin{minipage}{0.32\linewidth}	
  \centerline{\includegraphics[width=1.1in]{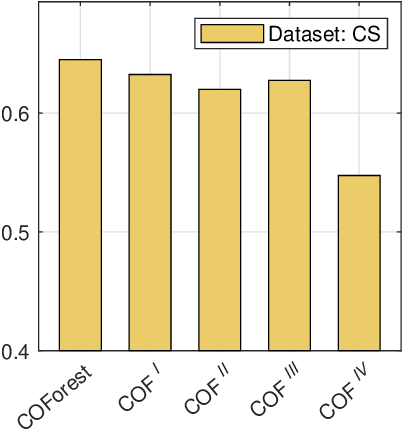}}	
\end{minipage}	
\hfill	
\begin{minipage}{0.32\linewidth}	
  \centerline{\includegraphics[width=1.1in]{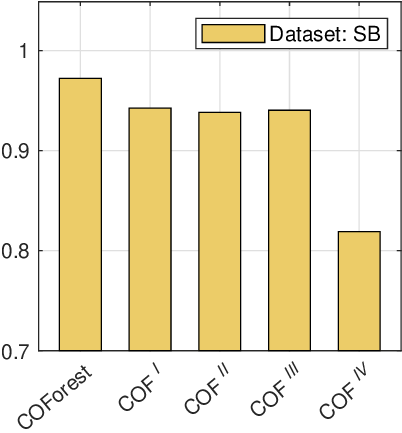}}	
\end{minipage}	
\vfill	
\begin{minipage}{0.32\linewidth}	
  \centerline{\includegraphics[width=1.1in]{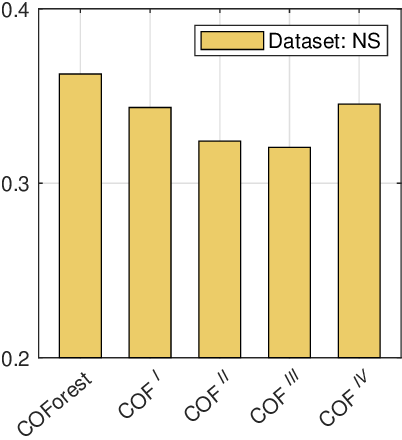}}	
\end{minipage}	
\hfill	
\begin{minipage}{0.32\linewidth}	
  \centerline{\includegraphics[width=1.1in]{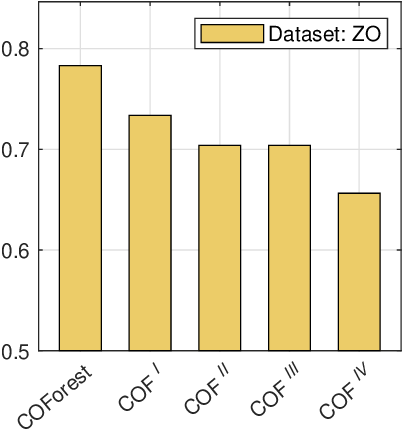}}	
\end{minipage}	
\hfill	
\begin{minipage}{0.32\linewidth}	
  \centerline{\includegraphics[width=1.1in]{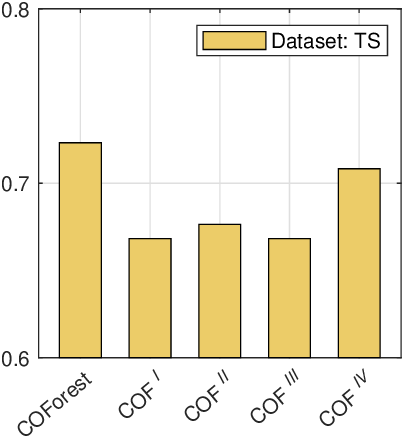}}	
\end{minipage}	
\vfill	
\begin{minipage}{0.32\linewidth}	
  \centerline{\includegraphics[width=1.1in]{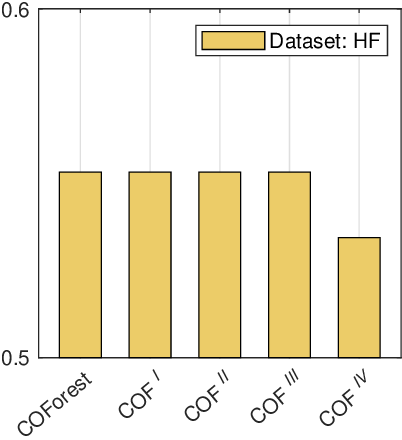}}	
\end{minipage}	
\hfill	
\begin{minipage}{0.32\linewidth}	
  \centerline{\includegraphics[width=1.1in]{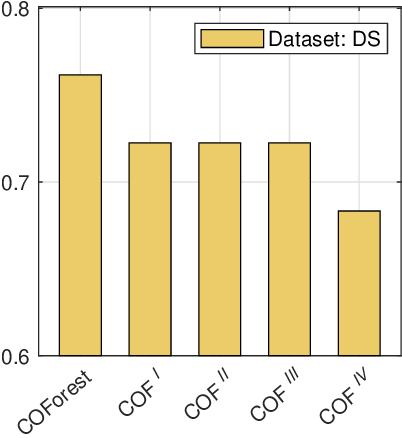}}	
\end{minipage}	
\hfill	
\begin{minipage}{0.32\linewidth}	
  \centerline{\includegraphics[width=1.1in]{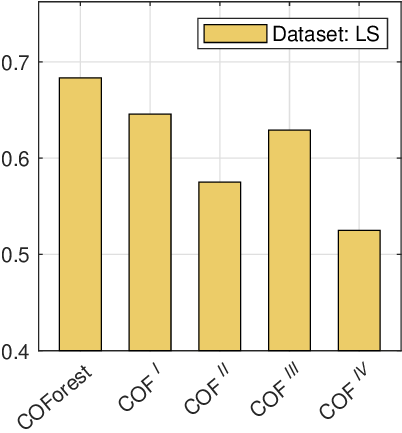}}	
\end{minipage}	
\caption{CA performance of different ablated COForest versions.}	
\label{fig:ablation_mix3}	
\end{figure}

To explicitly demonstrate the effectiveness of the core components of COForest, several ablated versions of it are compared in Figure~\ref{fig:ablation_mix3}. To evaluate the proposed order forest learning mechanism, we compare COForest with COF$^I$, which constructs the order forest once without iterative learning. To evaluate the proposed order forest structure, COF$^I$ is modified by replacing the order forest with line graphs and fully connected graphs to form COF$^{II}$ and COF$^{III}$, respectively. Moreover, to verify the adopted probability distribution-based measure in Eq.~(\ref{eq:compute_w}) for weights computing, we further let COF$^{III}$ adopt the traditional Hamming distance and form the version COF$^{IV}$.

It can be observed from Figure~\ref{fig:ablation_mix3} that COForest outperforms its four variants, which generally illustrates its effectiveness. More specific observations are four-fold: 1) COForest performs not worse than COF$^I$ on all the datasets, validating the necessity of the joint learning of the order forest and clustering. 2) On 10 out of 12 datasets, the performance of COF$^I$ is not worse than COF$^{II}$ and COF$^{III}$. This indicates that our constructed order forest is more reasonable in reflecting the distance structures, even without learning. The reason would be that the order tree is a generalized distance structure as analyzed in Remark~\ref{remark1}, which can more flexibly represent multiple local order relationships. 3) The mutual win and loss of COF$^{II}$ and COF$^{III}$ across the 12 datasets reveals that both line graph and fully connected graph have their own limitations. 4) COF$^{III}$ adopting probability distribution-based measure outperforms COF$^{IV}$ adopting Hamming distance on 10 datasets, indicating that the use of the probability distributions in Eq.~(\ref{eq:compute_w}) is reasonable.

\subsection{Convergence and Efficiency Evaluation}
\label{sct: cAe}

\begin{figure}[!t]	
\centerline{\includegraphics[width=3.6in]{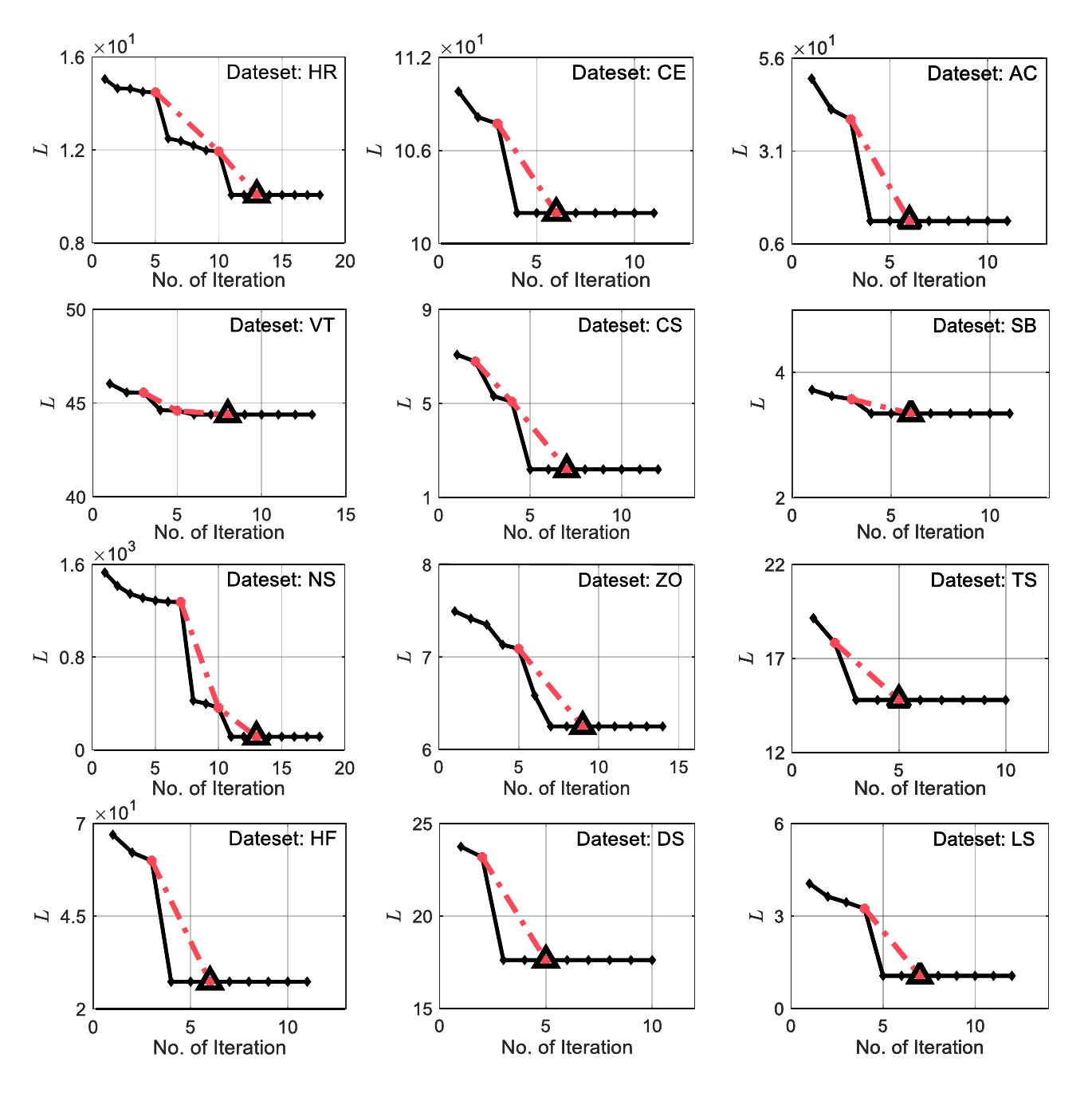}}
\caption{Convergence curves of COForest on different datasets. $L$ represents the value of the objective function. }	
\label{fig:conv_curve}	
\end{figure}

To evaluate the convergence of COForest, we plot its objective function values $L$ during the learning on all the 12 datasets in Figure \ref{fig:conv_curve}. The horizontal and vertical axes represent the number of learning iterations and the value of $L$, respectively. The triangle markers on the curve represent the iterations that COForest converges and the red dots mark the iterations of order forest reconstruction. It can be observed that, after each update of the order forest, $L$ decreases, indicating that the forest reconstruction is consistent with the minimization of $L$. Moreover, COForest converges within 15 iterations in most cases, which is quite efficient for a learning process that iteratively reconstructs the distance structure and learning data partitions.

\begin{figure}[!t]	
\centerline{\includegraphics[width=3.6in]{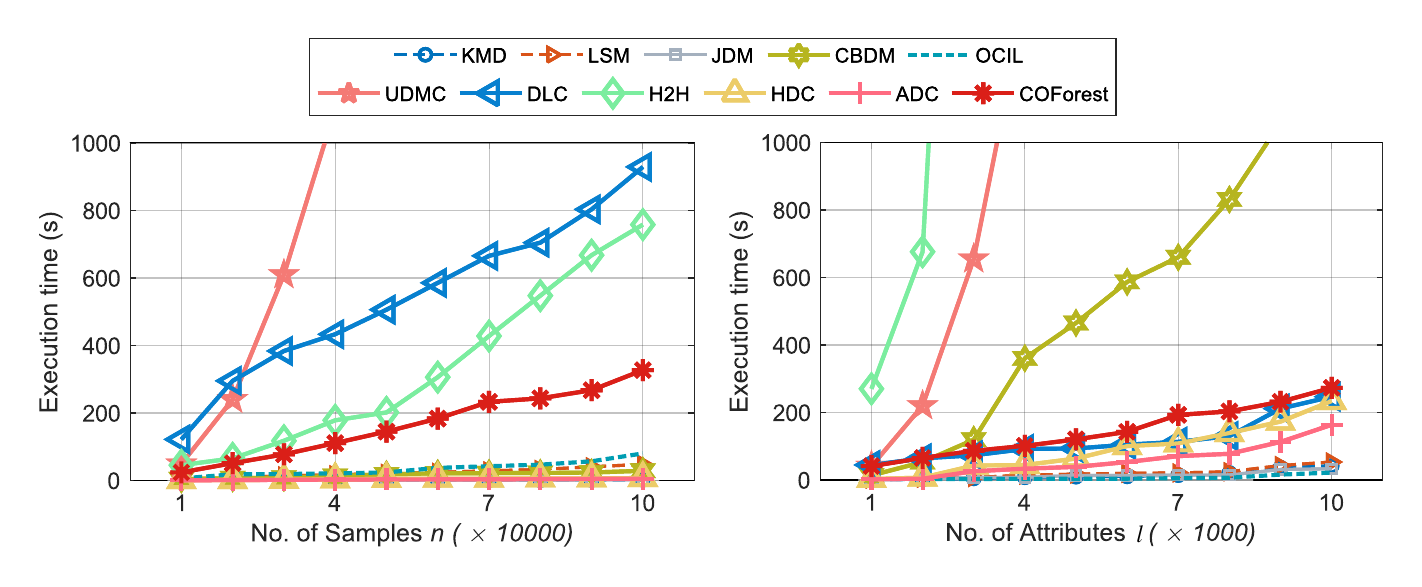}}
\caption{Execution time on synthetic datasets.}	
\label{fig:time_complex}	
\end{figure}	

To evaluate the efficiency of COForest, large synthetic datasets are randomly generated with different scales of attributes and samples. Specifically, we generate by: 1) Fixing the number of attributes at $l=20$ and increasing the number of samples $n$ from 10k to 100k with step-size 10k, and 2) Fixing sample size at $n=2k$ and increasing the number of attributes $l$ from 1k to 10k with step-size 1k, where `k' indicates `kilo'. Note that each attribute has five possible values, and the number of clusters $k$ is consistently set to five. The execution time of all the 11 methods is demonstrated in Figure~\ref{fig:time_complex}. It can be seen that the execution time of COForest is lower than or similar to the state-of-the-art UDMC, DLC, and H2H. Moreover, the increasing trend of the execution time of COForest is almost linear with $n$ and $l$, which is consistent with the time complexity analysis of Theorem~\ref{theorem:time_complex}. In summary, COForest is efficient compared to the state-of-the-art methods and does not incur too much additional computational cost compared to the simplest methods.
\vspace{-8pt}

\subsection{Qualitative Evaluation}
\label{sct: qa}

\begin{figure}[!t]	
\subfigure[CBDM $@$ AC]{
\begin{minipage}{0.27\linewidth}	
  \centerline{\includegraphics[width=1in]{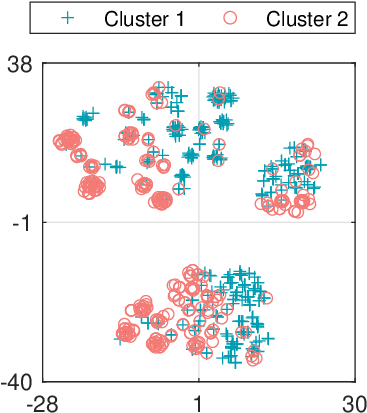}}	
\end{minipage}	
}
\hfill	
\subfigure[ADC $@$ AC]{
\begin{minipage}{0.27\linewidth}	
  \centerline{\includegraphics[width=1in]{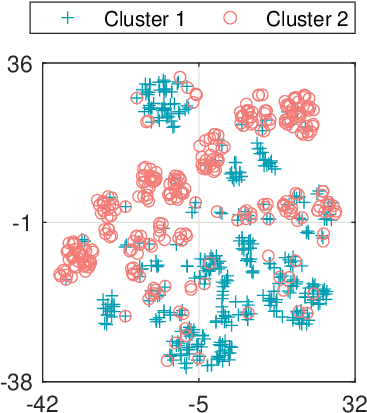}}	
\end{minipage}	
}
\hfill	
\subfigure[COForest $@$ AC]{
\begin{minipage}{0.27\linewidth}	
  \centerline{\includegraphics[width=1in]{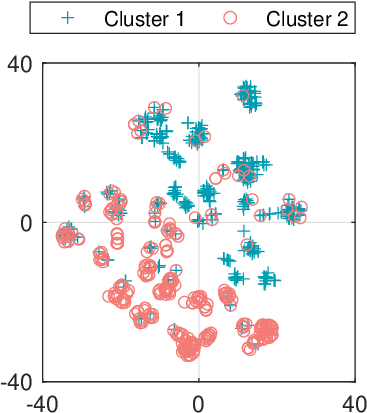}}	
\end{minipage}	
}
\caption{t-SNE visualization of the AC dataset.}	
\label{fig:Visualization_tsne}	
\end{figure}

To illustrate the cluster discrimination capability of COForest and the intuitiveness of the distance structure it obtains, we use the distance between attribute values learned by COForest, CBDM, and ADC to encode the attributes of the AC dataset. The encoded data are then dimensionally reduced into a 2-D space through t-SNE~\cite{r1visual} and visualized in Figure \ref{fig:Visualization_tsne} by marking the data points with `true' labels provided by the dataset. If more data points with the same label are gathered in the visualization, then it indicates that the corresponding distance metric is more competent in discriminating different clusters. It can be seen that COForest has significantly better cluster discrimination ability in the comparisons, which indicates the intuitiveness of its obtained clusters upon the tree-like distance structure.
\vspace{-8pt}
\section{Relate Work}
\label{sct:relate_work}
This section overviews the existing distance-measure-based and distance-learning-based categorical data clustering methods.

\textbf{Distance Measures} for categorical data including the measures yielded by encoding strategies and the directly defined distance measures. Traditional data encoding techniques, such as one-hot encoding, use Hamming distances to encode each possible value into a new attribute. However, it fails to capture the full spectrum of dissimilarity between possible values due to its boolean nature~\cite{intro10, intro6}. To overcome these limitations, statistical-based measures have been introduced that consider the frequency of intra-attribute values, thereby capturing lower information entropy for similar values and suggesting more reasonable distance metrics~\cite{lsm, ocil}. Further advancements in this area have developed metrics that account for inter-attribute dependencies, providing a more holistic view of the data relationships~\cite{adm, abdm, cbdm_journal}. Additionally, consideration of value order differentiates between nominal and ordinal attributes, with specific approaches defining distances by integrating semantic order, thereby obviously improving distance accuracy for ordinal data~\cite{ebdmjournal, udm}.

\textbf{Distance Learning} methods incorporate the defining of distances into the learning process of clustering. This includes advanced representation learning techniques that dynamically encode categorical data. For instance, some studies use various kernels to untangle attribute couplings more effectively~\cite{untie}, while others develop mixed encoding strategies for both numerical and categorical attributes~\cite{mai, mix2vec}. These often require meticulous tuning of hyperparameters. A significant step forward in this domain is the introduction of parameter-free approaches that learn the optimal number of clusters and the distances~\cite{woc,ICDCS'24}. Another innovative strategy involves transforming nominal attributes into ordinal ones through geometric projections to optimize latent distances, significantly enhancing clustering performance by unifying the treatment of different types of categorical attributes~\cite{het2hom}. Furthermore,~\cite{dlc} treats values with order information as line graphs and learns the graph weights. Later, the works~\cite{HDC, adc} further unify the distances of nominal and ordinal attributes and make them learnable with clustering.

These advances significantly improve clustering performance on categorical data. Nevertheless, coupled thorny problems still lie ahead: \textit{reasonable prior knowledge} is the premise of effective \textit{distance learning} whilst the \textit{data knowledge} is usually obtained by observing data distribution under \textit{well-defined distance metrics}.
\vspace{-8pt}

\section{Concluding Remarks}
\label{sct:Conclude}
This paper demonstrates and analyzes the key issue that bottlenecks the current qualitative data clustering performance, i.e., distance learning is restricted by prior knowledge of the distance structure. Accordingly, a new learning paradigm called COForest is proposed, which incorporates the construction of distance structure into the learning process and achieves joint optimization with clustering. Given the number of sought clusters $k$, the learning process of COForest is parameter-free and can be easily applied to various datasets. Moreover, the learned tree-like distance structures are concise and highly interpretable, making them very suitable for representing the implicit distribution of qualitative data. Extensive experiments illustrate the superiority of COForest, as well as the effectiveness of its key technical components.

The proposed method demonstrates outstanding clustering performance on static qualitative data under the given `true' number of clusters. In the future, it is promising to consider extending it to coping with more complex real situations, e.g., learning from \textit{streaming} data composed of a \textit{mixture} of quantitative and qualitative attributes with an \textit{unknown} number of \textit{imbalanced} clusters.



\section*{Acknowledgements}
This work was supported in part by the National Natural Science Foundation of China (NSFC) under grants: 62476063, 62102097, 62376233, and 62306181, the NSFC/Research Grants Council (RGC) Joint Research Scheme under the grant N\_HKBU214/21, the Natural Science Foundation of Guangdong Province under grants: 2024A1515010163 and 2023A1515012855, the General Research Fund of RGC under grants: 12201321, 12202622, and 12201323, the RGC Senior Research Fellow Scheme under grant SRFS2324-2S02, the Shenzhen Science and Technology Program under grant RCBS20231211090659101, and the Xiaomi Young Talents Program.

\bibliography{m863}
\end{document}